\documentclass[11pt]{article}

\usepackage[margin=1in]{geometry}
\usepackage{amsmath,amssymb,amsthm,mathtools}
\usepackage{enumitem}
\usepackage[hidelinks]{hyperref}

    \usepackage{graphicx, tikz}
        \usetikzlibrary{arrows.meta, positioning}
        
    \usepackage{mathtools, amsthm, amsfonts, amsmath, amssymb, nicefrac}
    \usepackage{enumitem}
        \setlist[itemize,enumerate]{
          nosep,
          topsep=0pt,
          partopsep=0pt,
          parsep=0pt,
          itemsep=0pt,
          leftmargin=1.5em,
          labelsep=0.4em
        }
    \usepackage{fullpage}
    \usepackage[numbers,sort&compress]{natbib}
    \usepackage{hyperref}
    \usepackage{cleveref} %TC: Keep this last!!!

    \newtheorem{corollary}{Corollary}
    \newtheorem{definition}{Definition}
    \newtheorem{proposition}{Proposition}
    
    \newtheorem{lemma}{Lemma}
    \newtheorem{remark}{Remark}
    \newtheorem{theorem}{Theorem}

    \newcommand{\cF}{\mathcal{F}}

    \newcommand{\E}{\mathbb{E}}

    \newcommand{\Pb}{\mathbb{P}}
    \newcommand{\lrb}[1]{\left(#1\right)}

    \newcommand{\lsb}[1]{\left[#1\right]}

    \DeclareSymbolFont{extraup}{U}{zavm}{m}{n}
    \DeclareMathSymbol{\clubsuit}{\mathalpha}{extraup}{84}
    \DeclareMathSymbol{\spadesuit}{\mathalpha}{extraup}{81}
    \DeclareMathSymbol{\varheartsuit}{\mathalpha}{extraup}{86}
    \DeclareMathSymbol{\vardiamondsuit}{\mathalpha}{extraup}{87}

\newcommand{\Reg}{\operatorname{Reg}}
\newcommand{\dmin}{\Delta_{\min}}

\title{Optimal Gap-Dependent Regret for Private Stochastic Decision-Theoretic Online Learning}
\author{
Tommaso Cesari\\
School of Electrical Engineering and Computer Science\\
University of Ottawa\\
Ottawa, Canada\\
\texttt{tcesari@uottawa.ca}
\and
Roberto Colomboni\\
School of Mathematics\\
University of Bristol\\
Bristol, United Kingdom\\
\texttt{roberto.colomboni@bristol.ac.uk}
}

\begin{document}

\maketitle

\begin{abstract}
We study stochastic decision-theoretic online learning with full information and event-level pure differential privacy.
A COLT open problem of \citeauthor{hu2024} \cite{hu2024} asks to determine the optimal gap-dependent regret rate for stochastic decision-theoretic online learning under pure event-level differential privacy.
For $K$ actions, losses in $[0,1]$, and a unique best action separated from the second-best action by gap $\dmin$, the known lower bound is of order
$
  \frac{\log K}{\min\{\dmin,\varepsilon\}},
$
or equivalently, up to universal constants, of order
\[
  \frac{\log K}{\dmin}+\frac{\log K}{\varepsilon}.
\]
We give a horizon-free pure-DP algorithm and prove the explicit regret bound
\[
  \Reg_T
  \le
  1000 \cdot \left(\frac{\log K}{\dmin}+\frac{\log K}{\varepsilon}\right)
\]
for every horizon $T$.
The numerical constant is not optimized.
The algorithm partitions time into blocks of exponentially increasing size, plays a single action throughout each block, and chooses the next action by an exponential mechanism applied to a data-independent random prefix of the previous block.
The random prefix converts block regret into a sum, over all prefix lengths, of softmax selection errors.
A single entropy-potential argument controls all privacy-dominated large-gap actions at cost $\log K/\varepsilon$.
\end{abstract}

\section{Introduction}

Decision-theoretic online learning \cite{freund1997} with stochastic full-information feedback is among the cleanest models where one can study the statistical cost of online decision making.
There are $K$ actions.
At each round, the learner chooses one action, suffers its loss, and observes the full loss vector.
When the loss vectors are i.i.d.\ and the best action is separated from the others by a gap, the optimal non-private gap-dependent regret is $\Theta(\log K/\dmin)$ \cite{kotlowski2018,mourtada2019}.

The differentially private version is more delicate.
\citeauthor{hu2021} \cite{hu2021} proved a lower bound of order
\[
  \frac{\log K}{\min\{\dmin,\varepsilon\}},
\]
which is equivalent up to universal constants to $\log K/\dmin+\log K/\varepsilon$.
\citeauthor{hu2024} \cite{hu2024} isolated the remaining question as a COLT open problem: determine the optimal instance-dependent regret rate under pure event-level differential privacy.
\citeauthor{wu2026} \cite{wu2026} recently made substantial progress and partially addressed the open problem by removing the horizon dependence and proving
\[
  O\left(\frac{\log K}{\dmin}+\frac{\log^2 K}{\varepsilon}\right)
\]
for the stochastic problem, together with a tight $\Theta(\log K/\varepsilon)$ analysis in a deterministic special case.

We close the remaining logarithmic gap in the stochastic problem.
Our algorithm is a randomized-prefix version of private Follow-The-Leader.
Time is partitioned into dyadic blocks.
During a block, the learner repeats one action.
At the end of the block, the learner samples a random prefix length from the second half of the block and applies the exponential mechanism to the cumulative losses over that prefix.
The selected action is then used throughout the next block.
Because each event is used only for the private selection of the following block, privacy follows from a direct factorization of the output law.
We give a direct factorization proof rather than relying on parallel composition.

The proof has two ingredients.
First, randomizing the prefix length removes the dyadic clock from the analysis: the regret over all blocks is bounded by a constant times
\[
  \sum_{m\ge 1}
  \E\left[\sum_{j=1}^K \Delta_j P_{m,j}\right],
\]
where $P_{m,j}$ is the random softmax probability of action $j$ after $m$ i.i.d.\ samples.
Second, this prefix-length softmax-error sum splits into small-gap and large-gap terms.
The small-gap part is controlled by Hoeffding's inequality and dyadic grouping, giving $\log K/\dmin$.
The large-gap part is controlled without summing over actions separately: a restricted softmax potential telescopes and costs only its initial entropy, $\log K$, divided by the privacy scale.
This is the step that removes the extra logarithm.

\emph{Thus, the proof also gives a conceptual explanation for the two terms in the optimal rate.}
The term $\log K/\dmin$ is the statistical cost of identifying the best action among near-optimal competitors.
The term $\log K/\varepsilon$ is the privacy cost of privately eliminating clearly suboptimal actions: once the gaps are large, concentration is no longer the bottleneck, and the only remaining resource being spent is the initial softmax entropy.
\emph{In this sense, the proof does not merely match the lower bound; it separates the statistical price from the privacy price and shows exactly where each one enters.}

\paragraph{Contributions.}
We give a constructive horizon-free $\varepsilon$-DP algorithm for stochastic DTOL and prove the explicit regret bound
\[
  \Reg_T
  \le
  1000\cdot\left(\frac{\log K}{\dmin}+\frac{\log K}{\varepsilon}\right).
\]
Together with the lower bound of \cite{hu2021}, this resolves the gap-dependent pure-DP rate posed by \cite{hu2024}.
Although the open problem is stated for the usual product-over-actions stochastic model, our proof only uses independence across time.
Thus it applies to i.i.d.\ loss vectors with arbitrary dependence among coordinates within each round; the product-over-actions model is a special case.

\section{Related Work}

\paragraph{Decision-theoretic online learning.}
Decision-theoretic online learning was introduced by \citeauthor{freund1997} \cite{freund1997} as a general full-information online decision model.
In the stochastic regime, Follow-The-Leader and exponentially weighted algorithms exhibit gap-dependent behavior governed by the separation between the best and second-best actions.
The sharp non-private rate is $\Theta(\log K/\dmin)$; see, for instance, the Follow-The-Leader upper bounds of \citeauthor{kotlowski2018} \cite{kotlowski2018} and the optimality results of \citeauthor{mourtada2019} \cite{mourtada2019}.
Our result preserves this statistical term and identifies the additional cost forced by pure event-level differential privacy.

\paragraph{Differential privacy and private selection.}
Differential privacy was introduced by \citeauthor{dwork2006} \cite{dwork2006}.
Our algorithm uses the exponential mechanism of \citeauthor{mcsherry2007} \cite{mcsherry2007} to select the next action after each block, by sampling from a softmax distribution over cumulative losses.
In contrast with standard one-shot private selection, the online setting requires the entire released action prefix to be private.
Our block construction is designed so that each event affects only one private selection.
The played actions inside the following block are deterministic post-processing of that selection.

\paragraph{Private stochastic online learning.}
\citeauthor{hu2021} \cite{hu2021} studied the differentially private stochastic online-learning setting most relevant here, including the full-information stochastic DTOL problem.
For full-information stochastic DTOL, their lower bound is of order
\[
  \frac{\log K}{\min\{\dmin,\varepsilon\}}.
\]
This is equivalent up to universal constants to $\log K/\dmin+\log K/\varepsilon$.
Their upper bound matches the statistical term but can pay an additional $\log T$ factor in the privacy term.
\citeauthor{hu2024} \cite{hu2024} then formulated the remaining gap as a COLT open problem: determine the optimal gap-dependent regret rate for stochastic DTOL under pure differential privacy.

\paragraph{Partial progress on the open problem.}
\citeauthor{wu2026} \cite{wu2026} partially addressed the open problem.
They removed the horizon dependence in the stochastic upper bound and obtained
\[
  O\left(\frac{\log K}{\dmin}+\frac{\log^2 K}{\varepsilon}\right),
\]
and they proved the tight $\Theta(\log K/\varepsilon)$ rate in a deterministic special case.
Their stochastic analysis still pays an additional $\log K$ in the privacy term.
At a technical level, their proof uses Bernoulli resampling and a monotonicity property, followed by a summation over actions that leaves the extra logarithm.

\paragraph{This work.}
The present algorithm remains in the pure-DP, stochastic, full-information DTOL setting of \cite{hu2024,wu2026}.
The new ingredient is a data-independent random prefix in each dyadic block.
This randomization converts block regret into a prefix-length sum of softmax selection errors.
For large-gap actions, the sum is controlled directly by a single softmax potential, whose initial value is at most $\log K$.
This removes the remaining logarithmic factor and yields the matching privacy term $\log K/\varepsilon$.

\section{Model}

Fix an integer $K\ge 2$ and write $[K]=\{1,\ldots,K\}$.
All random variables are defined on one probability space $(\Omega,\cF,\Pb)$.
The loss process is a sequence $(X_t)_{t\ge 1}$ of i.i.d.\ random variables with values in $[0,1]^K$.
We write
\[
  X_t=(X_{t,1},\ldots,X_{t,K}).
\]
The coordinates of $X_t$ may be dependent.
Only independence across different times is used.

At round $t$, the learner chooses an action $I_t\in[K]$ before observing $X_t$, suffers loss $X_{t,I_t}$, and then observes the whole vector $X_t$.
The action $I_t$ is therefore measurable with respect to the learner's internal randomization and the past loss vectors $X_1,\ldots,X_{t-1}$.
The full-information assumption is important here: the future observations do not depend on which action was played.

Let
\[
  \mu_j=\E[X_{1,j}],\qquad j\in[K].
\]
After relabeling actions, assume
\[
  \mu_1<\mu_2\le\cdots\le\mu_K.
\]
Define the pointwise expected-loss gaps
\[
  \Delta_1=0,
  \qquad
  \Delta_j=\mu_j-\mu_1\quad (j\in\{2,\ldots,K\}),
\]
and define
\[
  \dmin=\Delta_2.
\]
Since losses belong to $[0,1]$, one has $0<\dmin\le 1$ and $0\le \Delta_j\le 1$ for all $j$.

The expected pseudoregret up to horizon $T$ is
\[
  \Reg_T
  =
  \E\left[\sum_{t=1}^T \bigl(X_{t,I_t}-X_{t,1}\bigr)\right] =
  \sum_{t=1}^T \E[\Delta_{I_t}].
\]

\begin{definition}[Event-level pure differential privacy]
Fix $t\ge 1$.
Two deterministic prefixes $x_{1:t},x'_{1:t}\in([0,1]^K)^t$ are called neighboring if they differ in at most one whole loss vector.
A randomized online algorithm is $\varepsilon$-differentially private if, for every $t\ge 1$, every neighboring pair $x_{1:t},x'_{1:t}$, and every subset $S\subseteq[K]^t$,
\[
  \Pb\bigl((I_1,\ldots,I_t)\in S\mid x_{1:t}\bigr)
  \le
  e^\varepsilon
  \Pb\bigl((I_1,\ldots,I_t)\in S\mid x'_{1:t}\bigr).\footnote{The ``conditioning'' notation $\mid x_{1:t}$ means that the input prefix is fixed equal to $x_{1:t}$ and that probability is taken only over the algorithm's internal randomization.}
\]
Equivalently, for every fixed input prefix, the algorithm defines a probability measure on $[K]^t$, and the two measures associated with neighboring prefixes are within multiplicative factor $e^\varepsilon$ on every event.
\end{definition}

\section{Algorithm and Main Result}

All equalities involving conditional expectations are understood $\Pb$-almost surely.

Fix $\varepsilon>0$ and set
\[
  \eta=\min\{\varepsilon/2,1/8\}.
\]
For each $r\ge 0$, define the dyadic block
\[
  B_r=\{2^r,2^r+1,\ldots,2^{r+1}-1\}.
\]
Thus $|B_r|=2^r$.
The algorithm plays one action throughout each block.
Let $A_r$ be the action played on every round in $B_r$.
The initial action $A_0$ is sampled from any distribution on $[K]$ that does not depend on the loss sequence.
For concreteness, one may take $A_0$ uniform on $[K]$.

After block $B_r$ has been observed, the algorithm samples the action $A_{r+1}$ used in the next block.
For $r=0$, set $M_0=1$.
For $r\ge 1$, the learner samples an internal random index
\[
  M_r\sim \operatorname{Unif}\{2^{r-1}+1,\ldots,2^r\},
\]
independently of the loss vectors and of all other randomization.
For $s\in\{1,\ldots,2^r\}$ write
\[
  X^{(r,s)}=X_{2^r-1+s}.
\]
This is the $s$th loss vector in block $B_r$.
For $j\in[K]$ and $m\in\{1,
\ldots,2^r\}$ define the cumulative block-prefix loss
\[
  L_{r,j}(m)=\sum_{s=1}^m X^{(r,s)}_j.
\]
Given $M_r$ and the observed prefix $X^{(r,1)},\ldots,X^{(r,M_r)}$, the next action is sampled according to the exponential-weights distribution
\[
  \Pb(A_{r+1}=j\mid M_r,X^{(r,1)},\ldots,X^{(r,M_r)})
  =
  \frac{\exp(-\eta L_{r,j}(M_r))}
       {\sum_{i=1}^K \exp(-\eta L_{r,i}(M_r))}.
\]
Equivalently, this is the softmax of the negative cumulative losses
$-L_{r,j}(M_r)$ with inverse temperature $\eta$.

\begin{theorem}[Explicit gap-dependent private regret]\label{thm:main-explicit}
For every $K\ge 2$, every $\varepsilon>0$, every horizon $T\ge 1$, and every i.i.d.\ loss process with marginal distribution supported on $[0,1]^K$ with a unique best action, the randomized-prefix softmax algorithm is $\varepsilon$-differentially private and satisfies
\begin{equation}
\label{eq:regret_bound}    
  \Reg_T
  \le
  1
  +
  800\frac{\log K}{\dmin}
  +
  16\frac{\log K}{\eta},
  \qquad
  \eta=\min\{\varepsilon/2,1/8\}.
\end{equation}

In particular,
\begin{equation}
    \label{eq:main_bound}
  \Reg_T
  \le
  1000\left(
      \frac{\log K}{\dmin}
      +
      \frac{\log K}{\varepsilon}
    \right).
\end{equation}
The constant $1000$ is explicit and universal, but it is not optimized.
\end{theorem}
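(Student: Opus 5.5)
We prove the two claims separately: privacy by factorizing the output law, and regret via a prefix-length reduction followed by a small-gap/large-gap split.

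\emph{Privacy.} Fix $t$ and a neighboring pair $x_{1:t},x'_{1:t}$ that differ only at time $\tau\in B_{r_0}$. The released sequence $(I_1,\dots,I_t)$ is a deterministic function of $(A_0,\dots,A_R)$, where $R$ is the last block that intersects $[1,t]$. Given the input, the law of $(A_0,\dots,A_R)$ factorizes as
\[
\Pb(A_0=a_0)\prod_{r<R}\E_{M_r}\bigl[\mathrm{softmax}_\eta(-L_{r,\cdot}(M_r))_{a_{r+1}}\bigr].
\]
Each factor depends only on the data of block $B_r$. Note that $A_{R}$ with $R$ the block containing $t$ may need only rounds up to $t$; selections that require unobserved data do not affect $I_{1:t}$. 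So only the factor with $r=r_0$ changes.

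For each fixed $M_{r_0}$, changing one vector moves every $L_{r_0,j}(M_{r_0})$ by at most $1$. Hence each softmax probability changes by a factor of at most $e^{2\eta}\le e^{\varepsilon}$. This is the standard exponential-mechanism bound, with the $\exp(\eta)$ factor coming from the numerator and the $\exp(\eta)$ factor from the normalizer. Averaging over the data-independent $M_{r_0}$ preserves the ratio bound, and summing over $S$ gives $\varepsilon$-DP.

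\emph{Reduction to a prefix-length sum.} Write the regret as $\Reg_T\le \E[\Delta_{A_0}]+\sum_{r\ge0}2^{r+1}\E[\Delta_{A_{r+1}}]$. The first term is at most $1$. Define $p_m=\E[\sum_j\Delta_jP_{m,j}]$, where $P_{m,j}$ is the softmax of $-\eta$ times the sum of $m$ i.i.d.\ copies. Block $B_r$ consists of i.i.d.\ vectors, so $\E[\Delta_{A_{r+1}}]=2^{-(r-1)}\sum_{m=2^{r-1}+1}^{2^r}p_m$ for $r\ge1$, and for $r=0$ it equals $p_1$. Therefore
\[
2^{r+1}\E[\Delta_{A_{r+1}}]=4\sum_{m\in(2^{r-1},2^r]}p_m,
\]
and $\Reg_T\le 1+4\sum_{m\ge1}p_m$ up to the $r=0$ boundary term, which is also of order $p_1$. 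It therefore suffices to show $\sum_m p_m\le 200\log K/\dmin+4\log K/\eta$.

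\emph{Small gaps.} Consider actions with $\Delta_j$ small relative to $\eta$, say $\Delta_j\le c\eta$, together with all $m$. Bound $P_{m,j}\le\exp(-\eta(L_j-L_1))$. Hoeffding's inequality gives $P_{m,j}\le e^{-\eta m\Delta_j/2}$ except on an event of probability $e^{-m\Delta_j^2/8}$. The ``except'' part contributes $\sum_m\Delta_je^{-m\Delta_j^2/8}\approx 8/\Delta_j$ per action. This is too large if summed over all $j$, so instead we group actions dyadically by gap, $\Delta_j\in(2^{-k-1},2^{-k}]$. Within a group we use the union bound only through the event $\{\min_{j\in\text{group}}(L_j-L_1)\le m\Delta/2\}$, combined with the softmax weights, so that each scale costs $\log K/\Delta$ rather than $K/\Delta$. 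The resulting geometric sum over scales $\ge\dmin$ gives $O(\log K/\dmin)$.

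\emph{Large gaps: the main obstacle.} For actions $j\in G=\{\Delta_j>c\eta\}$, we plan a potential argument. Let
\[
\Phi_m=\E\log\Bigl(1+\sum_{j\in G}e^{-\eta(L_j(m)-L_1(m))}\Bigr).
\]
Then $\Phi_0\le\log K$ and $\Phi_m\ge0$. Conditioning on $\mathcal F_m$ and using the new vector $X_{m+1}$, a convexity (Hoeffding-lemma) step with $\eta\le1/8$ should give
\[
\Phi_m-\Phi_{m+1}\ge \tfrac{\eta}{2}\,\E\Bigl[\sum_{j\in G}\Delta_j\tilde P_{m,j}\Bigr],
\]
where $\tilde P$ is the softmax restricted to $G\cup\{1\}$. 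The drift of $L_j-L_1$ is $\Delta_j\ge c\eta$, and this dominates the $O(\eta^2)$ variance term. Since $P_{m,j}\le\tilde P_{m,j}$, telescoping gives
\[
\sum_m\E\Bigl[\sum_{j\in G}\Delta_jP_{m,j}\Bigr]\le 2\log K/\eta.
\]
The delicate point is establishing this one-step inequality uniformly. I would first try the bound $\log\E e^{-\eta Y}\le -\eta\E Y+\eta^2/8$ applied to the mixture, and absorb the $\eta^2$ term using $\Delta_j\ge c\eta$. Combining the two parts and then using $1/\eta\le 2/\varepsilon+8$ together with $8\log K\le 8\log K/\dmin$ yields \eqref{eq:main_bound}.
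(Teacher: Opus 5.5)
Your architecture matches the paper's. You use the same factorization for privacy and the same clock reduction $\Reg_T\le 1+4\sum_{m\ge1}F_m$; this holds exactly, since the $r=0$ block contributes $2F_1\le 4F_1$. You also split at $\Delta_j\le 4\eta$, group small gaps dyadically, and use the same restricted potential $\log Z_m$ with $Z_0\le K$ for large gaps. Your final conversion via $1/\eta\le 2/\varepsilon+8$ is also fine.

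\textbf{The large-gap drift step is not established.} You flag it as delicate yourself, and the route you suggest fails as stated. If you apply Hoeffding's lemma to the mixture, i.e.\ to $Y_{m+1,J}$ with $J\sim\tilde P_{m,\cdot}$ and $Y_{m+1,1}\equiv 0$, you get
$\E[\log Z_{m+1}-\log Z_m\mid\cF_m]\le-\eta V_m+\eta^2/2$, where $V_m=\sum_{j\in G}\Delta_j\tilde P_{m,j}$.
\begin{itemize}
\item The $\eta^2$ term then has weight $1$ rather than $\sum_{j\in G}\tilde P_{m,j}$, so $\Delta_j>c\eta$ cannot absorb it.
\item The right-hand side is positive whenever $\tilde P_{m,1}\approx 1$, which is the typical late regime.
\item Telescoping then gives only $\eta\sum_{m\le N}\E V_m\le\log K+(N+1)\eta^2/2$, which is not horizon-free.
\end{itemize}
The fix is the order of operations. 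First apply conditional Jensen:
$\E[\log(Z_{m+1}/Z_m)\mid\cF_m]\le\log\bigl(\tilde P_{m,1}+\sum_{j\in G}\tilde P_{m,j}\E e^{-\eta Y_{m+1,j}}\bigr)$.
Only then apply Hoeffding's lemma action by action, so each penalty is weighted by $\tilde P_{m,j}$ and action $1$ (with $Y_{m+1,1}=0$) pays nothing. Also, $Y_{m+1,j}\in[-1,1]$ has range $2$, so the correct bound is $\E e^{-\eta Y_{m+1,j}}\le\exp(-\eta\Delta_j+\eta^2/2)$, not $+\eta^2/8$. With $\Delta_j>4\eta$ this is at most $e^{-7\eta\Delta_j/8}\le 1-\eta\Delta_j/4$. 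The argument of the logarithm is then at most $1-\eta V_m/4$, and $\log(1-u)\le-u$ gives drift $\eta V_m/4$, hence $\sum_{m\ge0}\E V_m\le 4\log K/\eta$.

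\textbf{The small-gap constants are not verified.} The theorem asserts explicit constants, but your small-gap part only gives $O(\log K/\dmin)$, so the $200$ (and hence $800$ and $1000$) is never checked. Your ``min-event plus softmax cap'' bookkeeping can be made to work. Note, though, that the good-event term $\sum_j\Delta_j e^{-\eta m\Delta_j/2}$ costs about $|H|/\eta$ per group if summed naively, just like the bad-event term. It is harmless only because $\Delta_j\le 4\eta$ turns it into $e^{-m\Delta_j^2/8}$.

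The paper makes this precise as follows:
\begin{itemize}
\item It shows $\E P_{m,j}\le 2e^{-m\Delta_j^2/8}$.
\item It groups the small-gap actions with $\Delta_j\in[2^{\ell-1}\dmin,2^\ell\dmin)$ into $H_\ell$.
\item It uses a burn-in $\tau_\ell=\lceil 16\log(|H_\ell|+1)/(2^{\ell-1}\dmin)^2\rceil$. Before it, $\sum_{j\in H_\ell}P_{m,j}\le 1$ is used; after it, the per-action tails sum to at most $16/(2^{\ell-1}\dmin)$.
\item This gives $(32\log(|H_\ell|+1)+18)/(2^{\ell-1}\dmin)$ per group, and $200\log K/\dmin$ after summing the geometric series.
\end{itemize}
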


\begin{corollary}[Matching rate]
Combining \Cref{thm:main-explicit} with the lower bound of \cite{hu2021} gives the optimal gap-dependent rate
\[
  \Theta\left(
      \frac{\log K}{\dmin}
      +
      \frac{\log K}{\varepsilon}
    \right).
\]
\end{corollary}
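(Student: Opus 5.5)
The corollary follows by placing the upper bound of \Cref{thm:main-explicit} next to the lower bound of \cite{hu2021}, and then checking that the two agree up to universal constants on the same class of instances. First, the upper bound. Fix $K\ge 2$, $\varepsilon>0$, $T\ge 1$, and an instance with a unique best action and gap $\dmin$. Then \eqref{eq:main_bound} gives
\[
  \Reg_T\le 1000\left(\frac{\log K}{\dmin}+\frac{\log K}{\varepsilon}\right),
\]
and this holds uniformly in $T$. So the optimal (minimax over $\varepsilon$-DP algorithms) gap-dependent regret is $O(\log K/\dmin+\log K/\varepsilon)$. The theorem covers i.i.d.\ loss vectors with arbitrary dependence across coordinates, so it applies in particular to the product-over-actions model in which the lower bound of \cite{hu2021} is stated.

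Second, the lower bound. I will invoke the result of \cite{hu2021} as recalled in the introduction. For every $\varepsilon$-DP algorithm there are instances with $K$ actions and gap $\dmin$ on which the regret is $\Omega(\log K/\min\{\dmin,\varepsilon\})$, at least once $T$ is large enough relative to $1/\dmin$ and $1/\varepsilon$. I also need the elementary equivalence
\[
  \tfrac12\left(\frac1{\dmin}+\frac1\varepsilon\right)\le\max\left\{\frac1\dmin,\frac1\varepsilon\right\}=\frac{1}{\min\{\dmin,\varepsilon\}}\le \frac1\dmin+\frac1\varepsilon .
\]
Multiplying through by $\log K$, the lower bound becomes $\Omega(\log K/\dmin+\log K/\varepsilon)$.

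Combining the two steps gives the $\Theta$ statement. The only point needing care is to state exactly which quantity is $\Theta$, since the two sides are quantified differently. The rate is understood as the minimax regret over $\varepsilon$-DP algorithms on instances with parameters $(K,\dmin)$, in the regime of horizons where the lower bound of \cite{hu2021} applies. On that regime the minimax regret is at least a constant times the rate and at most $1000$ times it. No further argument is needed beyond this bookkeeping.
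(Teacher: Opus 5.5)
Your proposal is correct and follows the paper's own reasoning: the corollary is just \Cref{thm:main-explicit} set against the $\log K/\min\{\dmin,\varepsilon\}$ lower bound of \cite{hu2021}, using that $\max\{1/\dmin,1/\varepsilon\}$ and $1/\dmin+1/\varepsilon$ agree up to a factor of $2$, as the introduction notes. Your two further remarks are also the right ones, and the paper leaves both implicit. First, the upper bound covers the product-over-actions model in which the lower bound is stated. Second, the $\Theta$ is meant for horizons large enough for the lower bound to apply, which is the paper's ``trivial cap by the horizon.''
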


\begin{remark}[Relation to \cite{wu2026}]
\cite{wu2026} prove the stochastic upper bound
\[
  O\left(\frac{\log K}{\dmin}+\frac{\log^2 K}{\varepsilon}\right).
\]
Their analysis uses Bernoulli resampling to obtain a monotonicity property and then sums an early-time contribution over action indices, yielding the remaining $\log K$ factor.
Our randomized-prefix clock instead exposes a softmax potential over prefix lengths.
For the privacy-dominated large-gap actions, the potential telescopes before any per-action harmonic summation is needed, leaving only one $\log K$.
\end{remark}

\section{Proofs}

\subsection{Auxiliary facts used in the proof}

\begin{lemma}[Hoeffding bound with the constants used below]\label{lem:hoeffding}
Let $U_1,\ldots,U_m$ be independent random variables with values in $[-1,1]$ and common expectation $\E[U_s]=\mu>0$.
Then
\[
  \Pb\left(\sum_{s=1}^m U_s\le \frac{m\mu}{2}\right)
  \le
  \exp\left(-\frac{m\mu^2}{8}\right).
\]
\end{lemma}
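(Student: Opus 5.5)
We prove the lemma by the exponential-moment (Chernoff) method, using Hoeffding's lemma for bounded variables.

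Set $V_s=\mu-U_s$, so that the $V_s$ are independent and centered. Since $U_s\in[-1,1]$, each $V_s$ lies in the interval $[\mu-1,\mu+1]$, which has length $2$. The target event is
\[
  \Bigl\{\sum_{s=1}^m U_s\le \tfrac{m\mu}{2}\Bigr\}
  =
  \Bigl\{\sum_{s=1}^m V_s\ge \tfrac{m\mu}{2}\Bigr\}.
\]

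Fix $\lambda>0$. By Markov's inequality applied to $\exp(\lambda\sum_s V_s)$, and then independence, we get
\[
  \Pb\Bigl(\sum_{s=1}^m V_s\ge \tfrac{m\mu}{2}\Bigr)
  \le
  e^{-\lambda m\mu/2}\prod_{s=1}^m \E\bigl[e^{\lambda V_s}\bigr].
\]
Hoeffding's lemma states that a centered variable supported in an interval of length $b-a$ satisfies $\E[e^{\lambda V}]\le \exp(\lambda^2(b-a)^2/8)$. With length $2$ this gives $\E[e^{\lambda V_s}]\le e^{\lambda^2/2}$ for each $s$. Hence the probability is at most
\[
  \exp\bigl(-\lambda m\mu/2+m\lambda^2/2\bigr).
\]

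The exponent is minimized at $\lambda=\mu/2$, which is positive because $\mu>0$. At this value the exponent equals
\[
  -\frac{m\mu^2}{4}+\frac{m\mu^2}{8}
  =
  -\frac{m\mu^2}{8},
\]
which is exactly the stated bound.

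The only real step is the appeal to Hoeffding's lemma; the rest is routine. For completeness, it can be justified by the standard convexity bound on $e^{\lambda v}$ over the interval, followed by a second-order estimate of the resulting log-moment-generating function. The hypothesis that the $U_s$ share a common mean is used only to write the centering as $m\mu$. No identical-distribution assumption is needed.
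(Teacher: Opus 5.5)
Your proof is correct and is essentially the paper's argument. The paper cites Hoeffding's inequality (Hoeffding 1963, Theorem 2) directly with $a_s=-1$, $b_s=1$, $u=m\mu/2$, getting the exponent $2u^2/(4m)=m\mu^2/8$; your Chernoff bound with Hoeffding's lemma at $\lambda=\mu/2$ is the standard proof of that inequality, specialized to these parameters.
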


\begin{proof}
Hoeffding's inequality for independent variables $U_s\in[a_s,b_s]$ states that, for every $u>0$,
\[
  \Pb\left(\sum_{s=1}^m (U_s-\E[U_s])\le -u\right)
  \le
  \exp\left(-\frac{2u^2}{\sum_{s=1}^m (b_s-a_s)^2}\right).
\]
This is \cite[Theorem 2]{hoeffding1963probability}.
Here $a_s=-1$, $b_s=1$, and $u=m\mu/2$.
Therefore
\[
  \frac{2u^2}{\sum_{s=1}^m(b_s-a_s)^2}
  =
  \frac{2(m\mu/2)^2}{4m}
  =
  \frac{m\mu^2}{8},
\]
which proves the claim.
\end{proof}

\begin{lemma}[Elementary inequalities]\label{lem:elementary}
The following inequalities hold.
\begin{enumerate}[label=(\roman*)]
\item $1-e^{-x}\ge x/2$, for every $x\in[0,1]$. \label{eq:1}
\item $e^{-x}\le 1-x/2$, for every $x\in[0,1]$.\label{eq:2}
\item $\log(1-u)\le -u$, for every $u\in[0,1)$.\label{eq:3}
\item $
  \sum_{m>n}e^{-mx}
  \le
  \frac{2e^{-nx}}{x}
    $, for every $x\in(0,1]$ and every integer $n\ge 0$.\label{eq:4}
\item $\log(K+1)\le 2\log K$, for every integer $K\ge 2$.\label{eq:5}
\end{enumerate}
\end{lemma}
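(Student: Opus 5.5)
Each of the five items follows from one-variable calculus or a geometric series, so I will prove them independently in the order stated.

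For \ref{eq:1}, set $f(x)=1-e^{-x}-x/2$. Then $f(0)=0$, and $f$ is concave because $f''(x)=-e^{-x}<0$. A concave function on $[0,1]$ lies above the chord joining its endpoint values, so it suffices to check $f(1)=1/2-e^{-1}>0$. This is true since $e>2$.

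Item \ref{eq:2} is an algebraic rearrangement of \ref{eq:1}: the inequality $e^{-x}\le 1-x/2$ says exactly $1-e^{-x}\ge x/2$.

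For \ref{eq:3}, I use $\log y\le y-1$ for $y>0$. This holds by concavity of $\log$, since $y-1$ is its tangent line at $y=1$. Taking $y=1-u>0$ gives $\log(1-u)\le -u$.

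For \ref{eq:4}, the series is geometric:
\[
  \sum_{m>n}e^{-mx}=\frac{e^{-(n+1)x}}{1-e^{-x}}=\frac{e^{-nx}}{e^{x}-1}.
\]
Since $e^x-1\ge x$ for every $x$, we get $1/(e^x-1)\le 1/x\le 2/x$, which is the claim. Item \ref{eq:1} would also give the bound directly, because $1-e^{-x}\ge x/2$ and $e^{-(n+1)x}\le e^{-nx}$.

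For \ref{eq:5}, the inequality $\log(K+1)\le 2\log K$ is equivalent to $K+1\le K^2$. For integers $K\ge 2$ we have $K^2-K-1=K(K-1)-1\ge 1>0$.

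None of these steps is a real obstacle. The only places needing care are the boundary check $e^{-1}<1/2$ in \ref{eq:1} and keeping the summation index starting at $m=n+1$ in \ref{eq:4}.
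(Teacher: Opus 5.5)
Your proof is correct and follows the paper's argument item by item. The paper also settles (i) by checking the endpoint values of $1-e^{-x}-x/2$, using its monotonicity on $[0,\log 2]$ and $[\log 2,1]$ where you use concavity. It proves (iii) by differentiating $-u-\log(1-u)$ rather than by the tangent-line bound for $\log$, and it gets (iv) from the geometric series combined with (i); your version of (iv) via $e^{x}-1\ge x$ is slightly sharper, giving $e^{-nx}/x$ for all $x>0$.
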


\begin{proof}
For (i), define $g(x)=1-e^{-x}-x/2$.
Then $g(0)=0$, $g'(x)=e^{-x}-1/2$, and $g(1)=1-e^{-1}-1/2>0$.
The function increases on $[0,\log 2]$ and decreases on $[\log 2,1]$, so its minimum on $[0,1]$ is nonnegative.
This proves (i).
Item (ii) is the same inequality rearranged.
For (iii), define $h(u)=-u-\log(1-u)$.
Then $h(0)=0$ and $h'(u)=u/(1-u)\ge 0$ for $u\in[0,1)$.
For (iv),
\[
  \sum_{m>n}e^{-mx}
  =
  \frac{e^{-(n+1)x}}{1-e^{-x}}
  \le
  \frac{e^{-nx}}{1-e^{-x}}
  \le
  \frac{2e^{-nx}}{x},
\]
where the last inequality uses (i).
For (v), $K+1\le K^2$ for $K\ge2$, hence $\log(K+1)\le2\log K$.
\end{proof}

\subsection{Privacy}

We first prove privacy for the one-block selector as an ordinary randomized map on deterministic datasets.

\begin{definition}[Differential privacy for a randomized map]
Fix an integer $n\ge1$.
A randomized map $\mathsf M:([0,1]^K)^n\to[K]$ is $\alpha$-differentially private if, for every pair of deterministic datasets $d,d'\in([0,1]^K)^n$ that differ in at most one coordinate vector and every $S\subseteq[K]$,
\[
  \Pb(\mathsf M(d)\in S)
  \le
  e^\alpha\Pb(\mathsf M(d')\in S).
\]
\end{definition}

\begin{lemma}[Privacy of the softmax map]
\label{lem:block-privacy}
Fix $n\ge 1$, and let $N$ be a random variable on $\{1,\ldots,n\}$ whose
distribution does not depend on the dataset.
Given a deterministic dataset $d=(x_1,\ldots,x_n)\in([0,1]^K)^n$, define
\[
  L_j^d(m)=\sum_{s=1}^m x_{s,j},
  \qquad
  j\in[K],
  \quad
  m\in\{1,\ldots,n\}.
\]
After sampling $N$, output $J\in[K]$ with conditional law
\[
  \Pb(J=j\mid N=m,d)
  =
  \frac{\exp(-\eta L_j^d(m))}
       {\sum_{i=1}^K \exp(-\eta L_i^d(m))}.
\]
Then the randomized map $d\mapsto J$ is $2\eta$-differentially private in the
sense of Definition~2.
\end{lemma}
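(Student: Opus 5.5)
We prove the bound pointwise in the prefix length and then average over the data-independent law of $N$. Fix neighboring datasets $d=(x_1,\ldots,x_n)$ and $d'=(x'_1,\ldots,x'_n)$ that differ only in coordinate $s_0$. Fix also $m\in\{1,\ldots,n\}$ and $j\in[K]$.

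The first step is a sensitivity bound. For every $i$, the cumulative losses $L_i^d(m)$ and $L_i^{d'}(m)$ differ by at most one term, namely $x_{s_0,i}-x'_{s_0,i}$ when $s_0\le m$, and by nothing otherwise. Since losses lie in $[0,1]$, this gives
\[
  |L_i^d(m)-L_i^{d'}(m)|\le 1 \qquad\text{for all } i\in[K].
\]

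The second step bounds the softmax ratio. The numerator satisfies $\exp(-\eta L_j^d(m))\le e^{\eta}\exp(-\eta L_j^{d'}(m))$. Each summand of the denominator satisfies $\exp(-\eta L_i^d(m))\ge e^{-\eta}\exp(-\eta L_i^{d'}(m))$, so the whole normalizer for $d$ is at least $e^{-\eta}$ times the normalizer for $d'$. Combining the two,
\[
  \Pb(J=j\mid N=m,d)\le e^{2\eta}\,\Pb(J=j\mid N=m,d').
\]

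The third step is mixing. The law of $N$ is the same under $d$ and $d'$ because it does not depend on the dataset. For any $S\subseteq[K]$ we therefore have
\[
  \Pb(J\in S\mid d)=\sum_{m}\Pb(N=m)\sum_{j\in S}\Pb(J=j\mid N=m,d).
\]
Applying the pointwise inequality term by term gives
\[
  \Pb(J\in S\mid d)\le e^{2\eta}\,\Pb(J\in S\mid d'),
\]
which is $2\eta$-differential privacy in the sense of Definition~2.

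There is no real obstacle in this argument. The only point needing care is that $N$ must be independent of the data; otherwise the mixing weights could change between $d$ and $d'$ and break the bound. This is exactly why the random prefix in the algorithm is data-independent.
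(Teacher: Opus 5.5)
Your proof is correct and is essentially the paper's argument. You bound the sensitivity of each prefix sum by $1$, and take a factor $e^{\eta}$ from the numerator and another from the normalizer to get the pointwise bound $e^{2\eta}$ for each $m$ and $j$. You then sum over $j\in S$ and average over the data-independent law of $N$.
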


\begin{proof}
Let $d=(x_1,\ldots,x_n)$ and $d'=(x'_1,\ldots,x'_n)$ differ in at most one vector.
Fix $m\in\{1,\ldots,n\}$.
For every $j\in[K]$,
\[
  |L_j^d(m)-L_j^{d'}(m)|\le 1,
\]
because the two prefixes either are identical or differ in one vector whose $j$th coordinate lies in $[0,1]$.
Write
\[
  Z^d(m)\coloneqq \sum_{i=1}^K\exp(-\eta L_i^d(m)).
\]
For every $i$,
\[
  \exp(-\eta L_i^{d'}(m))
  \le
  e^\eta \exp(-\eta L_i^d(m)),
\]
and therefore
\[
  Z^{d'}(m)\le e^\eta Z^d(m).
\]
For every $j\in[K]$,
\[
\begin{aligned}
  \frac{\Pb(J=j\mid N=m,d)}{\Pb(J=j\mid N=m,d')}
  &=
  \exp\{-\eta(L_j^d(m)-L_j^{d'}(m))\}
  \frac{Z^{d'}(m)}{Z^d(m)} 
  \le e^\eta e^\eta
  =e^{2\eta}.
\end{aligned}
\]
Summing this pointwise bound over $j\in S$ gives
\[
  \Pb(J\in S\mid N=m,d)
  \le
  e^{2\eta}\Pb(J\in S\mid N=m,d')
\]
for every $S\subseteq[K]$.
Now average over $N$.
Since the law of $N$ is the same under $d$ and $d'$,
\[
\begin{aligned}
  \Pb(J\in S\mid d)
  &=
  \sum_{m=1}^n \Pb(N=m)\Pb(J\in S\mid N=m,d) \\
  &\le
  e^{2\eta}
  \sum_{m=1}^n \Pb(N=m)\Pb(J\in S\mid N=m,d') \\
  &=
  e^{2\eta}\Pb(J\in S\mid d').
\end{aligned}
\]
This is exactly $2\eta$-differential privacy for the softmax map.
\end{proof}

\begin{proposition}[Global event-level privacy]\label{prop:global-privacy}
The randomized-prefix softmax algorithm is $\varepsilon$-differentially private.
\end{proposition}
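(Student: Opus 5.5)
The plan is a direct factorization of the law of the released action sequence, reducing everything to Lemma~\ref{lem:block-privacy}. Fix $t\ge1$, a deterministic prefix $x_{1:t}$, and let $R$ be the index of the block containing $t$, so that $t\in B_R$. Because each action $A_r$ is repeated throughout $B_r$, the vector $(I_1,\ldots,I_t)$ is a deterministic function of $(A_0,\ldots,A_R)$. It therefore suffices to show that, for every $(a_0,\ldots,a_R)\in[K]^{R+1}$,
\[
  \Pb(A_0=a_0,\ldots,A_R=a_R\mid x_{1:t})\le e^{\varepsilon}\,\Pb(A_0=a_0,\ldots,A_R=a_R\mid x'_{1:t}).
\]
Summing this pointwise bound over the preimage of $S$ then gives the required inequality for every $S\subseteq[K]^t$.

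The first step is to write the joint law as a product. By construction, $A_0$ does not depend on the data. For $r\ge0$, $A_{r+1}$ is produced from the internal index $M_r$, which is independent of everything else, together with the block-$r$ data $X^{(r,1)},\ldots,X^{(r,2^r)}$. Conditionally on the past actions, its law is exactly the output law of the map in Lemma~\ref{lem:block-privacy}, applied with $n=2^r$, $N=M_r$, and dataset $d_r$ equal to block $r$. I need to check that every block $r\le R-1$ lies entirely inside $\{1,\ldots,t\}$, since $2^{r+1}-1\le 2^R-1<t$. Hence
\[
  \Pb(A_{0:R}=a_{0:R}\mid x_{1:t})=\Pb(A_0=a_0)\prod_{r=0}^{R-1} q_r(a_{r+1}\mid d_r),
\]
where $q_r(\cdot\mid d_r)$ is the selector's output law. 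The data in the current block $B_R$ beyond round $t$ never enter, and the data of block $R$ that are already observed do not influence $A_0,\ldots,A_R$ at all.

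The second step uses neighboring prefixes. If $x_{1:t}$ and $x'_{1:t}$ differ in a single vector at some time $\tau\le t$, then $\tau$ lies in exactly one block, say $B_{r^*}$, because the blocks partition the rounds. If $r^*\le R-1$, only the factor $q_{r^*}$ changes. Lemma~\ref{lem:block-privacy} bounds the ratio of that factor by $e^{2\eta}$ for each singleton $\{a_{r^*+1}\}$, and all other factors are identical. If $r^*=R$, no factor changes at all. Since $2\eta\le\varepsilon$ by the choice $\eta=\min\{\varepsilon/2,1/8\}$, the claim follows.

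The main obstacle is the rigorous justification of the product formula. One must argue carefully that, given the realized past actions, the conditional law of $A_{r+1}$ depends only on $d_r$ and not on earlier data or earlier choices of the $M$'s. This follows by sequentially conditioning on the mutually independent internal randomizations $A_0,M_0,M_1,\ldots$ and on the exponential-mechanism sampling draws, using the fact that each sampling step uses fresh randomness. I would present this as an explicit sum over the values of $(M_0,\ldots,M_{R-1})$, which turns the product into a product of mixtures, each equal to $q_r$.
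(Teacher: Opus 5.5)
Your proposal is correct and takes the same route as the paper. You reduce to the block actions $A_0,\ldots,A_R$, factorize their law as $\Pb(A_0=a_0)\prod_{r<R} q_r(a_{r+1}\mid d_r)$, observe that a change in the current block affects no factor while a change in an earlier block affects exactly one factor (bounded by $e^{2\eta}$ via Lemma~\ref{lem:block-privacy}), and then push forward to $(I_1,\ldots,I_t)$; your explicit sum over $(M_0,\ldots,M_{R-1})$ just spells out the product formula that the paper asserts directly.
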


\begin{proof}
Fix a horizon $t\ge1$, and let $s=s(t)$ be the unique integer such that
$t\in B_s$.
The action prefix $(I_1,\ldots,I_t)$ is a deterministic function of the block
actions
\[
  A_0,A_1,\ldots,A_s .
\]
Indeed, if $u\in B_r$ and $u\le t$, then $I_u=A_r$.

Fix a deterministic prefix $y_{1:t}$.
Since $A_0$ is sampled independently of the data, and since for each
$\ell\in\{0,\ldots,s-1\}$ the action $A_{\ell+1}$ is sampled using only the data
in block $B_\ell$, the law of $(A_0,\ldots,A_s)$ under the fixed input prefix
$y_{1:t}$ factorizes as follows: for every $a_{0:s}\in[K]^{s+1}$,
\[
\begin{aligned}
&\Pb\lrb{(A_0,\ldots,A_s)=a_{0:s}\mid y_{1:t}}
\\
&\quad =
\Pb\lrb{A_0=a_0}
\prod_{\ell=0}^{s-1}
\Pb\lrb{
  A_{\ell+1}=a_{\ell+1}\mid y_{B_\ell}
}.
\end{aligned}
\]
Here $y_{B_\ell}$ denotes the restriction of the deterministic prefix $y_{1:t}$
to the block $B_\ell$.
The product is empty when $s=0$.

Now fix two deterministic prefixes $x_{1:t}$ and $x'_{1:t}$ that differ in at
most one time index.
If the differing vector lies in the current block $B_s$, then it does not appear
in any of the blocks $B_0,\ldots,B_{s-1}$.
Hence all factors in the above product are identical for $x_{1:t}$ and
$x'_{1:t}$, and therefore the two laws of $(A_0,\ldots,A_s)$ are identical.

Otherwise, the differing vector lies in a previous block $B_r$, for some
$r\le s-1$.
Then all factors in the product are identical except possibly the factor
\[
  \Pb\lrb{A_{r+1}=a_{r+1}\mid x_{B_r}}.
\]
By \Cref{lem:block-privacy}, applied to the two deterministic block datasets
$x_{B_r}$ and $x'_{B_r}$, changing one loss vector in $B_r$ changes the law of
$A_{r+1}$ by at most a multiplicative factor $e^{2\eta}$.
In particular, for every $a_{r+1}\in[K]$,
\[
  \Pb\lrb{A_{r+1}=a_{r+1}\mid x_{B_r}}
  \le
  e^{2\eta}
  \Pb\lrb{A_{r+1}=a_{r+1}\mid x'_{B_r}}.
\]
Combining this inequality with the product factorization gives, for every
$a_{0:s}\in[K]^{s+1}$,
\[
  \Pb\lrb{(A_0,\ldots,A_s)=a_{0:s}\mid x_{1:t}}
  \le
  e^{2\eta}
  \Pb\lrb{(A_0,\ldots,A_s)=a_{0:s}\mid x'_{1:t}}.
\]
Summing over $a_{0:s}\in E$ yields, for every $E\subseteq[K]^{s+1}$,
\[
  \Pb\lrb{(A_0,\ldots,A_s)\in E\mid x_{1:t}}
  \le
  e^{2\eta}
  \Pb\lrb{(A_0,\ldots,A_s)\in E\mid x'_{1:t}}.
\]

Finally, define the deterministic map $\phi_t:[K]^{s+1}\to[K]^t$ by
\[
  \phi_t(a_0,\ldots,a_s)(u)=a_{s(u)},
  \qquad u\in[t],
\]
where $s(u)$ is the unique integer such that $u\in B_{s(u)}$.
Then
\[
  (I_1,\ldots,I_t)=\phi_t(A_0,\ldots,A_s).
\]
For any $S\subseteq[K]^t$, apply the preceding inequality with
$E=\phi_t^{-1}(S)$.
This gives
\[
  \Pb\lrb{(I_1,\ldots,I_t)\in S\mid x_{1:t}}
  \le
  e^{2\eta}
  \Pb\lrb{(I_1,\ldots,I_t)\in S\mid x'_{1:t}}.
\]
Since $2\eta\le\varepsilon$, the algorithm is
$\varepsilon$-differentially private.
\end{proof}

\subsection{Randomized-prefix clock}

Let $(X_s)_{s\ge1}$ be the loss-vector process.
For $m\ge1$ and $j\in[K]$, define the cumulative loss
\[
  L_{m,j}=\sum_{s=1}^m X_{s,j}.
\]
For $j\in\{2,\ldots,K\}$ define the cumulative empirical gap
\[
  D_{m,j}=L_{m,j}-L_{m,1}
  =
  \sum_{s=1}^m (X_{s,j}-X_{s,1}).
\]
For $m\ge1$, define the random softmax weights
\[
  P_{m,1}
  \coloneqq
  \frac{1}{1+\sum_{i=2}^K e^{-\eta D_{m,i}}},
  \qquad
  P_{m,j}
  \coloneqq
  \frac{e^{-\eta D_{m,j}}}{1+\sum_{i=2}^K e^{-\eta D_{m,i}}}
  \quad (j\ge2).
\]
Finally define the deterministic number
\[
  F_m
  \coloneqq
  \E\left[\sum_{j=1}^K\Delta_jP_{m,j}\right]
  =
  \E\left[\sum_{j=2}^K\Delta_jP_{m,j}\right].
\]

\begin{lemma}[Clock reduction with explicit constant]\label{lem:clock}
For every horizon $T\ge1$,
\[
  \Reg_T
  \le
  1+4\sum_{m\ge1}F_m.
\]
\end{lemma}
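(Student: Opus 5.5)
We decompose the regret by blocks and bound each block by a conditional expectation of the softmax error. Since $\Delta\le 1$, block $B_0$ (the single round $t=1$) contributes at most $1$; this is the additive constant. For $r\ge1$, every round of $B_r$ plays $A_r$. Hence the contribution of $B_r\cap[T]$ is at most $2^r\,\E[\Delta_{A_r}]$, and it suffices to show
\[
\sum_{r\ge1}2^r\,\E[\Delta_{A_r}]\le 4\sum_{m\ge1}F_m .
\]
Here $A_r$ is produced from block $B_{r-1}$, using prefix length $M_{r-1}$. (For $r=1$ we have $M_0=1$, so $\E[\Delta_{A_1}]=F_1$ and $2F_1\le 4F_1$.)

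The key observation is that the block $B_{r-1}$ consists of i.i.d.\ loss vectors $X^{(r-1,1)},\dots,X^{(r-1,2^{r-1})}$, with the same law as $X_1,\dots,X_{2^{r-1}}$. Moreover, $M_{r-1}$ is independent of these vectors. Conditionally on $M_{r-1}=m$, the softmax over $L_{r-1,\cdot}(m)$ equals the softmax over the differences $L_{r-1,j}(m)-L_{r-1,1}(m)$, because the common shift cancels. Therefore
\[
\E[\Delta_{A_r}\mid M_{r-1}=m]=F_m .
\]
Averaging over the uniform law of $M_{r-1}$ on $\{2^{r-2}+1,\dots,2^{r-1}\}$ for $r\ge2$ gives
\[
\E[\Delta_{A_r}]=\frac{1}{2^{r-2}}\sum_{m=2^{r-2}+1}^{2^{r-1}}F_m .
\]
So $2^r\E[\Delta_{A_r}]=4\sum_{m\in(2^{r-2},2^{r-1}]}F_m$.

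Summing over $r\ge2$, the intervals $(2^{r-2},2^{r-1}]$ partition $\{2,3,\dots\}$. Adding the $r=1$ term, which covers $m=1$, yields the claimed bound $4\sum_{m\ge1}F_m$. Truncation at $T$ only decreases the sum, since $F_m\ge0$.

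The main care point is the conditioning step. We must check that $A_r$'s law depends only on the data of block $B_{r-1}$ and on $M_{r-1}$. We also need that $\Delta_{A_r}$ integrates to $F_m$ exactly: this follows from the tower property, using the independence of $M_{r-1}$ from the losses and the time-shift invariance of the i.i.d.\ process. Everything else is bookkeeping of the dyadic factor $2^r/2^{r-2}=4$.
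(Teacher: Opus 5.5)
Your proposal is correct and follows the paper's proof essentially verbatim: you bound block $B_0$ by $1$, charge at most $2^r\,\E[\Delta_{A_r}]$ to each later block, and identify $\E[\Delta_{A_r}\mid M_{r-1}=m]=F_m$ from the independence of the random prefix and the i.i.d.\ structure. You then collect the dyadic factor $2^r/2^{r-2}=4$ over the partition of $\{2,3,\dots\}$; apart from a shift of the block index, this is exactly the paper's argument, and your remark that the softmax is invariant under the common shift $L_{r-1,1}(m)$ makes explicit a step the paper uses silently when passing from $L_{r,j}(M_r)$ to the gap-based weights $P_{m,j}$.
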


\begin{proof}
The initial block $B_0=\{1\}$ has length one.
Since every gap is at most $1$, its regret contribution is at most $1$.

Fix $r\ge0$.
The action $A_{r+1}$ is played during block $B_{r+1}$.
If the horizon $T$ contains the whole block $B_{r+1}$, then this action is played
$|B_{r+1}|=2^{r+1}$ times.
If $T$ cuts the block $B_{r+1}$, then it is played fewer times.
In both cases, the number of rounds in which $A_{r+1}$ is played is at most
$2^{r+1}$.

Moreover, conditionally on $M_r=m$, the data used to sample $A_{r+1}$ are the
first $m$ loss vectors of block $B_r$.
Since the loss process is i.i.d.\ and $M_r$ is independent of the losses, these
$m$ vectors have the same distribution as $X_1,\ldots,X_m$. Therefore
\[
  \E[\Delta_{A_{r+1}}\mid M_r=m]=F_m.
\]
Taking expectation over $M_r$ gives
\[
  \E[\Delta_{A_{r+1}}]=\E[F_{M_r}].
\]
Thus the regret charged to block $B_{r+1}$ is at most
\[
  2^{r+1}\E[F_{M_r}].
\]
For $r=0$, $M_0=1$, hence this is $2F_1$, which is no greater than $4F_1$.
For $r\ge1$, $M_r$ is uniform on $\{2^{r-1}+1,\ldots,2^r\}$, a set of cardinality $2^{r-1}$.
Therefore
\[
  2^{r+1}\E[F_{M_r}]
  =
  2^{r+1}\frac{1}{2^{r-1}}
  \sum_{m=2^{r-1}+1}^{2^r}F_m
  =
  4\sum_{m=2^{r-1}+1}^{2^r}F_m.
\]
The intervals $\{2^{r-1}+1,\ldots,2^r\}$ for $r\ge1$ are disjoint and cover all integers $m\ge2$.
Adding the initial block contribution and all following block contributions gives
\[
  \Reg_T
  \le
  1+4F_1+4\sum_{m\ge2}F_m
  =
  1+4\sum_{m\ge1}F_m.
\]
\end{proof}

\subsection{Prefix-length softmax-error bound}

\begin{lemma}[Cumulative softmax-error bound with explicit constants]\label{lem:master}
For $\eta\le1/8$,
\[
  \sum_{m\ge1}F_m
  \le
  200\frac{\log K}{\dmin}
  +
  4\frac{\log K}{\eta}.
\]
\end{lemma}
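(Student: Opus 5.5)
The plan is to split the suboptimal actions at the privacy scale, writing
\[
  G=\{j\ge2:\Delta_j\ge\eta\},\qquad S=\{j\ge2:\Delta_j<\eta\}.
\]
Accordingly, $F_m=F_m^G+F_m^S$, where $F_m^G=\E[\sum_{j\in G}\Delta_jP_{m,j}]$ and $F_m^S=\E[\sum_{j\in S}\Delta_jP_{m,j}]$. The large-gap part should cost $4\log K/\eta$ through a single potential, with no sum over actions. The small-gap part should cost $O(\log K/\dmin)$ through Hoeffding (\Cref{lem:hoeffding}) and dyadic grouping of the gaps.

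\emph{Large gaps.} Set $D_{0,j}=0$ and define the restricted potential
\[
  \Phi_m=\log\Bigl(1+\sum_{j\in G}e^{-\eta D_{m,j}}\Bigr)\ge0,
\]
so that $\Phi_0=\log(1+|G|)\le\log K$. Let $Q_{m,j}$ be the restricted softmax weights, $Q_{m,1}\propto1$ and $Q_{m,j}\propto e^{-\eta D_{m,j}}$ for $j\in G$, and put $Y_j=X_{m+1,j}-X_{m+1,1}\in[-1,1]$. Then
\[
  \Phi_{m+1}-\Phi_m=\log\sum_{j\in\{1\}\cup G}Q_{m,j}e^{-\eta Y_j}.
\]
Since $X_{m+1}$ is independent of $X_{1:m}$, conditional Jensen and Hoeffding's lemma give $\E[e^{-\eta Y_j}]\le e^{-\eta\Delta_j+\eta^2/2}$. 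For $j\in G$ this is at most $e^{-\eta\Delta_j/2}$, which is at most $1-\eta\Delta_j/4$ by \Cref{lem:elementary}\ref{eq:2}, using $\eta\Delta_j/2\le1$. Applying \Cref{lem:elementary}\ref{eq:3} yields
\[
  \E[\Phi_{m+1}-\Phi_m\mid X_{1:m}]\le-\frac{\eta}{4}\sum_{j\in G}\Delta_jQ_{m,j}.
\]
Removing terms from the denominator only increases weights, so $P_{m,j}\le Q_{m,j}$ for $j\in G$. Telescoping the inequality and using $\Phi\ge0$ then gives
\[
  \sum_{m\ge0}F_m^G\le\frac{4\Phi_0}{\eta}\le\frac{4\log K}{\eta}.
\]
This step is exactly where the extra $\log K$ of \cite{wu2026} disappears.

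\emph{Small gaps.} For $k\ge0$ let $S_k=\{j\in S:\Delta_j\in(2^{-k-1},2^{-k}]\}$. Only indices $k$ with $2^{-k}\ge\dmin$ occur, and every occurring $k$ satisfies $2^{-k-1}<\eta$, i.e.\ $\eta2^k>1/2$. Fix $m_k=\lceil c\,4^k\log K\rceil$ for a constant $c$ to be chosen (e.g.\ $c=64$).
\begin{itemize}
\item For $m\le m_k$, use $\sum_jP_{m,j}\le1$. The group then contributes at most $2^{-k}m_k=O(2^k\log K)$.
\item For $m>m_k$, split on the event $\{D_{m,j}>m\Delta_j/2\}$.
  \begin{itemize}
  \item Off this event, \Cref{lem:hoeffding} bounds the probability by $e^{-m\Delta_j^2/8}$. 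Summing $2^{-k}|S_k|e^{-m4^{-k-1}/8}$ over $m>m_k$ with \Cref{lem:elementary}\ref{eq:4} gives $O(|S_k|2^k e^{-c'\log K})=O(2^k)$ once $c$ is large.
  \item On this event, the bound $P_{m,j}\le e^{-\eta D_{m,j}}\le e^{-\eta m2^{-k-2}}$ holds. Summing over $m>m_k$ gives at most $2^{-k}|S_k|\cdot(2^{k+3}/\eta)\,e^{-\eta m_k2^{-k-2}}$. Since $\eta2^k>1/2$, the exponent is at least of order $c\log K$, which kills $|S_k|\le K$, and $1/\eta<2^{k+1}$, so this term is also $O(2^k)$.
  \end{itemize}
\end{itemize}
Each group therefore costs $O(2^k\log K)$. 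Summing the geometric series over $k\le\log_2(1/\dmin)$ gives $O(\log K/\dmin)$, which should fit under $200\log K/\dmin$.

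The main obstacle is the large-gap potential step: getting the one-step drift $-\tfrac{\eta}{4}\sum\Delta_jQ_{m,j}$ with exactly the constant $4$, and checking that passing from $P$ to the restricted $Q$ goes the right way. The small-gap part is routine but constant-heavy. There the work is tuning $c$ so that both the Hoeffding tails and the post-concentration softmax tails are absorbed into the $200\log K/\dmin$ budget. The role of $\eta\le1/8$ is to make $\eta\Delta_j/2\le1$ and the conversion $2^k\gtrsim1/\eta$ valid.
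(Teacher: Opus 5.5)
Your proof is correct and is essentially the paper's argument: large-gap actions are handled by telescoping the restricted softmax potential $\log Z_m$ (conditional Jensen plus Hoeffding's lemma, giving the same one-step drift $-\tfrac{\eta}{4}V_m$ and the bound $4\log K/\eta$), and small-gap actions are handled by Hoeffding's inequality with dyadic grouping of the gaps and an early/late split in $m$. The differences are cosmetic. You cut at $\eta$ rather than $4\eta$. In the small-gap groups you bound the softmax tail $e^{-\eta m\Delta_j/2}$ separately using $\eta 2^k>1/2$, rather than absorbing it into the Hoeffding tail via $\Delta_j\le 4\eta$. With $c=64$ the small-gap total comes to roughly $155\log K/\dmin$, which is within the stated budget.
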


\begin{proof}
Recall that, for $m\ge1$,
\[
  F_m
  =
  \E\left[
    \sum_{j=2}^K \Delta_j P_{m,j}
  \right].
\]
Therefore
\begin{equation}
\label{eq:decomposition}
    \begin{aligned}
  \sum_{m\ge1}F_m
  &=
  \sum_{m\ge1}
  \E\left[
    \sum_{\substack{j \in \{2,\dots,K\} \\\Delta_j\le 4\eta}}
    \Delta_jP_{m,j}
  \right]
  +
  \sum_{m\ge1}
  \E\left[
    \sum_{\substack{j \in \{2,\dots,K\} \\\Delta_j>4\eta}}
    \Delta_jP_{m,j}
  \right].
\end{aligned}
\end{equation}

We bound the two terms separately.

\paragraph{Small-gap contribution.}
If there is no gap $\Delta_j \le 4\eta$, then the first sum is trivially bounded by $0$.
Otherwise, fix $j\in\{2,\ldots,K\}$ with $\Delta_j\le4\eta$.
For every $m\ge1$,
\[
  P_{m,j}
  \le
  e^{-\eta D_{m,j}},
\]
because the denominator in the definition of $P_{m,j}$ is at least $1$.
Therefore
\[
\begin{aligned}
\E\lsb{P_{m,j}}
&=
\E\lsb{
  P_{m,j}
  \mathbb I\left\{
    D_{m,j}\le \frac{m\Delta_j}{2}
  \right\}
}
+
\E\lsb{
  P_{m,j}
  \mathbb I\left\{
    D_{m,j}> \frac{m\Delta_j}{2}
  \right\}
}
\\
&\le
\E\lsb{
  \mathbb I\left\{
    D_{m,j}\le \frac{m\Delta_j}{2}
  \right\}
}
+
\E\lsb{
  e^{-\eta D_{m,j}}
  \mathbb I\left\{
    D_{m,j}> \frac{m\Delta_j}{2}
  \right\}
}
\\
&=
\Pb\left(
  D_{m,j}\le \frac{m\Delta_j}{2}
\right)
+
\E\lsb{
  e^{-\eta D_{m,j}}
  \mathbb I\left\{
    D_{m,j}> \frac{m\Delta_j}{2}
  \right\}
}
\\
&\le
\Pb\left(
  D_{m,j}\le \frac{m\Delta_j}{2}
\right)
+
\E\lsb{
  e^{-\eta m\Delta_j/2}
  \mathbb I\left\{
    D_{m,j}> \frac{m\Delta_j}{2}
  \right\}
}
\\
&\le
\Pb\left(
  D_{m,j}\le \frac{m\Delta_j}{2}
\right)
+
e^{-\eta m\Delta_j/2}.
\end{aligned}
\]
The random variables $X_{s,j}-X_{s,1}$ are independent across $s$, take values in $[-1,1]$, and have expectation $\Delta_j$.
By \Cref{lem:hoeffding},
\[
  \Pb\left(D_{m,j}\le\frac{m\Delta_j}{2}\right)
  \le
  \exp\left(-\frac{m\Delta_j^2}{8}\right).
\]
Since $\Delta_j\le4\eta$, we have $\eta\ge\Delta_j/4$, and thus
\[
  \exp\left(-\frac{\eta m\Delta_j}{2}\right)
  \le
  \exp\left(-\frac{m\Delta_j^2}{8}\right).
\]
Consequently,
\begin{equation}\label{eq:small-action-tail}
  \E[P_{m,j}]
  \le
  2\exp\left(-\frac{m\Delta_j^2}{8}\right).
\end{equation}

For each integer $\ell\ge1$, define
\[
  H_\ell
  =
  \left\{j\in\{2,\ldots,K\}:
  \Delta_j\le4\eta,
  \quad
  2^{\ell-1}\dmin\le\Delta_j<2^\ell\dmin
  \right\}.
\]
Fix $\ell$ with $H_\ell\ne\varnothing$ and define
\[
  \tau_\ell
  =
  \left\lceil
  \frac{16\log(|H_\ell|+1)}{(2^{\ell-1}\dmin)^2}
  \right\rceil.
\]
For $m\le\tau_\ell$, using $\Delta_j<2^\ell\dmin$ for $j\in H_\ell$ and $\sum_{j\in H_\ell}P_{m,j}\le1$, we get the pathwise bound
\[
  \sum_{j\in H_\ell}\Delta_jP_{m,j}
  \le
  2^{\ell}\dmin
  \sum_{j\in H_\ell}P_{m,j}
  \le
  2^{\ell}\dmin.
\]
Therefore the early part of group $H_\ell$ is at most
\[
\begin{aligned}
  \sum_{m=1}^{\tau_\ell}
  \E\left[\sum_{j\in H_\ell}\Delta_jP_{m,j}\right]
  &\le
  2^{\ell}\dmin\tau_\ell  \\
  &\le
  \frac{32\log(|H_\ell|+1)}{2^{\ell-1}\dmin}
  +
  2^{\ell}\dmin \\
  &\le
  \frac{32\log(|H_\ell|+1)+2}{2^{\ell-1}\dmin}.
\end{aligned}
\]
The last step uses $2^{\ell-1}\dmin\le1$, which follows from $H_\ell\ne\varnothing$ and $\Delta_j\le1$.

For $m>\tau_\ell$, use \eqref{eq:small-action-tail} and $\Delta_j<2^{\ell}\dmin$ to obtain
\[
\begin{aligned}
  \sum_{m>\tau_\ell}
  \E\left[\sum_{j\in H_\ell}\Delta_jP_{m,j}\right]
  &\le
  \sum_{m>\tau_\ell}
  \sum_{j\in H_\ell}
  4(2^{\ell-1}\dmin)
  \exp\left(-\frac{m(2^{\ell-1}\dmin)^2}{8}\right).
\end{aligned}
\]
For nonempty $H_\ell$, the inequality $2^{\ell-1}\dmin\le\Delta_j\le4\eta$ and $\eta\le1/8$ imply $2^{\ell-1}\dmin\le1/2$.
Set $x=(2^{\ell-1}\dmin)^2/8$.
Then $x\in(0,1]$.
By \ref{eq:4} in \Cref{lem:elementary},
\[
  \sum_{m>\tau_\ell}
  \exp\left(-\frac{m(2^{\ell-1}\dmin)^2}{8}\right)
  \le
  \frac{16}{(2^{\ell-1}\dmin)^2}
  \exp\left(-\frac{\tau_\ell(2^{\ell-1}\dmin)^2}{8}\right).
\]
The definition of $\tau_\ell$ gives
\[
  \exp\left(-\frac{\tau_\ell(2^{\ell-1}\dmin)^2}{8}\right)
  \le
  \frac{1}{(|H_\ell|+1)^2}.
\]
Hence
\[
\begin{aligned}
  \sum_{m>\tau_\ell}
  \E\left[\sum_{j\in H_\ell}\Delta_jP_{m,j}\right]
  &\le
  \frac{64|H_\ell|}{2^{\ell-1}\dmin}
  \frac{1}{(|H_\ell|+1)^2} \le
  \frac{16}{2^{\ell-1}\dmin},
\end{aligned}
\]
because $a/(a+1)^2\le1/4$ for every integer $a\ge1$.
Combining the early and late parts yields
\[
  \sum_{m\ge1}
  \E\left[\sum_{j\in H_\ell}\Delta_jP_{m,j}\right]
  \le
  \frac{32\log(|H_\ell|+1)+18}{2^{\ell-1}\dmin}.
\]
Now sum over $\ell\ge1$.
Since $|H_\ell|\le K$ and $K\ge2$,
\begin{align}
\label{eq:small-total}
  \sum_{m\ge1}
  \E\left[
    \sum_{\substack{j \in \{2,\dots,K\} \\\Delta_j\le4\eta}}
    \Delta_jP_{m,j}
  \right]
  &=
\sum_{\substack{\ell \ge 1\\ H_\ell \neq \varnothing}}
  \sum_{m\ge1}\E\left[\sum_{j\in H_\ell}\Delta_jP_{m,j}\right]
  \nonumber\\
  &\le
  \sum_{\substack{\ell \ge 1\\ H_\ell \neq \varnothing}}
  \frac{32\log(|H_\ell|+1)+18}{2^{\ell-1}\dmin}
  \nonumber\\
  &\le
  \frac{1}{\dmin}
  \sum_{\ell\ge1}
  \frac{32\log(K+1)+18}{2^{\ell-1}} \nonumber \\
  &\le
  \frac{64\log(K+1)+36}{\dmin} \nonumber \\
  &\le
  200\frac{\log K}{\dmin},
\end{align}
where the last inequality uses \ref{eq:5} in \Cref{lem:elementary} and $36\le(36/\log2)\log K\le72\log K$ for $K\ge2$.
\paragraph{Large-gap contribution.}
If there is no gap $\Delta_j > 4\eta$, then the second sum is trivially bounded by $0$.
Assume then that there is at least one gap $\Delta_j$ such that $\Delta_j > 4\eta$.
For $m\ge0$ and $j\ge2$, define
\[
  D_{0,j}=0.
\]
For the large-gap actions $j$ with $\Delta_j>4\eta$, define
\[
  Z_m
  =
  1+\sum_{\substack{j \in \{2,\dots,K\} \\\Delta_j>4\eta}}e^{-\eta D_{m,j}}.
\]
Also define the random weights
\[
  Q_{m,1}=\frac{1}{Z_m},
  \qquad
  Q_{m,j}=\frac{e^{-\eta D_{m,j}}}{Z_m}
  \quad\text{for }j\in\{2,\ldots,K\}\text{ with }\Delta_j>4\eta.
\]
Finally define the random large-gap weighted gap
\[
  V_m
  =
  \sum_{\substack{j \in \{2,\dots,K\} \\\Delta_j>4\eta}}\Delta_j Q_{m,j}.
\]
Notice that
\[
  Q_{m,1}
  +
  \sum_{\substack{j \in \{2,\dots,K\} \\\Delta_j>4\eta}}
  Q_{m,j}
  =
  1,
  \qquad
  \sum_{\substack{j \in \{2,\dots,K\} \\\Delta_j>4\eta}}
  Q_{m,j}
  \le 1.
\]
For every large-gap action $j$, the denominator defining $P_{m,j}$ contains all terms appearing in $Z_m$ and possibly more positive terms.
Thus
\[
  P_{m,j}\le Q_{m,j}
  \qquad\text{for every }m\ge1\text{ and every }j\text{ with }\Delta_j>4\eta,
\]
which implies
\[
        \sum_{m\ge 1}\sum_{\substack{j \in \{2,\dots,K\} \\\Delta_j>4\eta}}\Delta_j P_{m,j}
    \le
        \sum_{m\ge 1}\sum_{\substack{j \in \{2,\dots,K\} \\\Delta_j>4\eta}}\Delta_j Q_{m,j}
    \le
        \sum_{m\ge 0}\sum_{\substack{j \in \{2,\dots,K\} \\\Delta_j>4\eta}}\Delta_j Q_{m,j}
    =
        \sum_{m\ge 0}V_m,
\]
and hence, taking expectations, noticing that all the addends are positive, and using Tonelli's theorem twice,
we get
\[
    \sum_{m\ge1}
  \E\left[
    \sum_{\substack{j \in \{2,\dots,K\} \\\Delta_j>4\eta}}
    \Delta_jP_{m,j}
  \right]
  \le
  \sum_{m\ge0}\E[V_m].
\]
We now prove that
\[
  \sum_{m\ge0}\E[V_m]
  \le
  4\frac{\log K}{\eta},
\]
to obtain
\begin{equation}
\label{eq:large-total}
    \sum_{m\ge1}
  \E\left[
    \sum_{\substack{j \in \{2,\dots,K\} \\\Delta_j>4\eta}}
    \Delta_jP_{m,j}
  \right]
  \le
  4\frac{\log K}{\eta}.
\end{equation}

For $m\ge0$ and every large-gap action $j$, set
\[
  Y_{m+1,j}=X_{m+1,j}-X_{m+1,1}.
\]
Then
\[
  D_{m+1,j}=D_{m,j}+Y_{m+1,j},
  \qquad
  Y_{m+1,j}\in[-1,1],
  \qquad
  \E[Y_{m+1,j}]=\Delta_j.
\]
For $m=0$, $\cF_0$ is the trivial sigma-algebra.
Let $\cF_m=\sigma(X_1,\ldots,X_m)$.
The weights $Q_{m,j}$ and $Z_m$ are $\cF_m$-measurable.
Moreover, since the loss vectors are i.i.d., $X_{m+1}$ is independent of $\cF_m$, and therefore
$Y_{m+1,j}$ is independent of $\cF_m$ for every $j\in[K]$.

Using
\[
  \frac{Z_{m+1}}{Z_m}
  =
  Q_{m,1}
  +
  \sum_{\substack{j \in \{2,\dots,K\} \\\Delta_j>4\eta}}
  Q_{m,j}\exp(-\eta Y_{m+1,j}),
\]
and the concavity of the logarithm, conditional Jensen gives
\[
\begin{aligned}
  \E\lsb{\log Z_{m+1}-\log Z_m\mid \cF_m}
  &=
  \E\lsb{
    \log\lrb{
      Q_{m,1}
      +
      \sum_{\substack{j \in \{2,\dots,K\} \\\Delta_j>4\eta}}
      Q_{m,j}\exp(-\eta Y_{m+1,j})
    }
    \mid \cF_m
  }
  \\
  &\le
  \log
  \E\lsb{
      Q_{m,1}
      +
      \sum_{\substack{j \in \{2,\dots,K\} \\\Delta_j>4\eta}}
      Q_{m,j}\exp(-\eta Y_{m+1,j})
    \mid \cF_m
  }
  \\
  &=
  \log \lrb{
      Q_{m,1}
      +
      \sum_{\substack{j \in \{2,\dots,K\} \\\Delta_j>4\eta}}
      Q_{m,j}\E\lsb{\exp(-\eta Y_{m+1,j})
    \mid \cF_m}
  }\\
  &=
  \log \lrb{
      Q_{m,1}
      +
      \sum_{\substack{j \in \{2,\dots,K\} \\\Delta_j>4\eta}}
      Q_{m,j}\E\lsb{\exp(-\eta Y_{m+1,j})
    }
  }.
\end{aligned}
\]

For each large-gap action $j$, Hoeffding's lemma \cite[Lemma 2.2]{boucheron2013concentration} for a random variable in an interval of length $2$ gives
\[
  \E[e^{-\eta Y_{m+1,j}}]
  \le
  \exp\left(-\eta\Delta_j+\frac{\eta^2}{2}\right).
\]
Since $\Delta_j>4\eta$, we have $\eta^2/2\le\eta\Delta_j/8$.
Thus
\[
  \E[e^{-\eta Y_{m+1,j}}]
  \le
  \exp\left(-\frac{7\eta\Delta_j}{8}\right).
\]
Because $\Delta_j\le1$ and $\eta\le1/8$, the number $\eta\Delta_j$ belongs to $[0,1]$.
Using \ref{eq:2} in \Cref{lem:elementary} with $x=7\eta\Delta_j/8$ gives
\[
  \exp\left(-\frac{7\eta\Delta_j}{8}\right)
  \le
  1-\frac{7\eta\Delta_j}{16}
  \le
  1-\frac{\eta\Delta_j}{4}.
\]
Consequently,
\[
\begin{aligned}
  Q_{m,1}
  +
  \sum_{\substack{j \in \{2,\dots,K\} \\\Delta_j>4\eta}}
  Q_{m,j}\E[e^{-\eta Y_{m+1,j}}]
  &\le
  Q_{m,1}
  +
  \sum_{\substack{j \in \{2,\dots,K\} \\\Delta_j>4\eta}}
  Q_{m,j}\left(1-\frac{\eta\Delta_j}{4}\right)
  =
  1-\frac{\eta V_m}{4}.
\end{aligned}
\]
Since $Q_{m,j}\ge0$, since
\[
Q_{m,1}+\sum_{j:\Delta_j>4\eta}Q_{m,j}=1,
\]
and since $0\le\Delta_j\le1$, we have
\[
0\le V_m
=
\sum_{j:\Delta_j>4\eta}\Delta_jQ_{m,j}
\le
\sum_{j:\Delta_j>4\eta}Q_{m,j}
\le1.
\]
Hence, because $\eta\le1/8$,
\[
0\le \frac{\eta V_m}{4}\le \frac{\eta}{4}\le\frac1{32}<1.
\]
By \ref{eq:3} of Lemma~2, applied with
$u=\eta V_m/4$, we get
\[
  \log\left(1-\frac{\eta V_m}{4}\right)
  \le
  -\frac{\eta V_m}{4}.
\]
Combining the previous displays gives
\[
  \E[\log Z_{m+1}-\log Z_m\mid\cF_m]
  \le
  -\frac{\eta V_m}{4}.
\]
Taking expectations,
\[
  \E[\log Z_{m+1}]-\E[\log Z_m]
  \le
  -\frac{\eta}{4}\E[V_m].
\]
Summing from $m=0$ to $N$ yields
\[
  \frac{\eta}{4}\sum_{m=0}^N\E[V_m]
  \le
  \E[\log Z_0]-\E[\log Z_{N+1}].
\]
Since $Z_{N+1}\ge1$, we have $\log Z_{N+1}\ge0$.
Since $D_{0,j}=0$,
\[
  Z_0
  =
  1+\left|\{j\in\{2,\ldots,K\}:\Delta_j>4\eta\}\right|
  \le
  K.
\]
Thus
\[
  \frac{\eta}{4}\sum_{m=0}^N\E[V_m]
  \le
  \log K.
\]
Letting $N\to\infty$ gives \eqref{eq:large-total}.

Combining \eqref{eq:small-total} and \eqref{eq:large-total} proves the lemma.
\end{proof}

\subsection{Proof of the main theorem}

\begin{proof}[Proof of \Cref{thm:main-explicit}]
Privacy follows from \Cref{prop:global-privacy}.
For regret, combine \Cref{lem:clock} and \Cref{lem:master}:
\[
\begin{aligned}
  \Reg_T
  \le
  1+4\sum_{m\ge1}F_m \le
  1
  +
  800\frac{\log K}{\dmin}
  +
  16\frac{\log K}{\eta}.
\end{aligned}
\]
This proves \eqref{eq:regret_bound}.
It remains only to express the bound in terms of $\varepsilon$.

If $\varepsilon\le1/4$, then $\eta=\varepsilon/2$, and hence
\[
  16\frac{\log K}{\eta}=32\frac{\log K}{\varepsilon}.
\]
Since $K\ge2$ and $\dmin\le1$,
\[
  1
  \le
  \frac{1}{\log2}\frac{\log K}{\dmin}
  \le
  2\frac{\log K}{\dmin}.
\]
Therefore
\[
  \Reg_T
  \le
  802\frac{\log K}{\dmin}
  +
  32\frac{\log K}{\varepsilon}
  \le
  1000\left(\frac{\log K}{\dmin}+\frac{\log K}{\varepsilon}\right).
\]

If $\varepsilon>1/4$, then $\eta=1/8$, and hence
\[
  16\frac{\log K}{\eta}=128\log K\le128\frac{\log K}{\dmin}.
\]
Again $1\le2\log K/\dmin$.
Thus
\[
  \Reg_T
  \le
  930\frac{\log K}{\dmin}
  \le
  1000\left(\frac{\log K}{\dmin}+\frac{\log K}{\varepsilon}\right).
\]
This proves \eqref{eq:main_bound}, and concludes the proof.
\end{proof}

\section{Why the Extra Logarithm Disappears}

\citet{wu2026} build on the dyadic report-noisy-max/follow-the-noisy-leader
algorithm of \citet{hu2021}.
The algorithm of \citet{hu2021} accumulates losses over an epoch and then uses
report-noisy-max with Laplace noise to choose the action played in the next epoch.
The analysis of \citet{wu2026} removes the horizon dependence from the regret bound,
but their proof still controls suboptimal actions separately through pairwise tail
bounds.
In the large-gap privacy-dominated regime, their per-action accounting produces a
harmonic summation over action indices, which yields the extra factor $\log K$ in
their privacy term.

Our randomized-prefix proof removes the per-action accounting.
After reducing the analysis from played rounds to block actions, the regret contribution of the privacy-dominated actions becomes a prefix-length sum of softmax errors:
\[
  \sum_{m\ge 1}
  \E\lsb{
    \sum_{j:\Delta_j>4\eta}
      \Delta_j P_{m,j}
  }.
\]
We control this prefix-length softmax sum through the restricted softmax potential
$\log Z_m$, rather than by separate tail bounds for each action.
The potential argument telescopes over $m$, and the whole large-gap contribution is
paid by the initial value
\[
  \log Z_0\le \log K,
\]
divided by the privacy scale $\eta$.

Consequently, our privacy term is of order $\log K/\eta$, and hence
$\log K/\varepsilon$, rather than $\log^2 K/\varepsilon$.
The random prefix is the mechanism behind the improvement: it turns the dyadic block regret into a prefix-length sum of softmax errors, and the potential argument controls this sum without action-by-action tail bounds.

\section{Conclusion}

The randomized-prefix clock turns the analysis of the dyadic private online-learning algorithm into a prefix-length sum of softmax errors.
This small change exposes an entropy potential that controls all large-gap actions simultaneously, giving the optimal $\log K/\varepsilon$ privacy cost instead of the $\log^2 K/\varepsilon$ cost left by previous stochastic analyses.
Together with the known lower bound, the result gives the optimal gap-dependent rate for pure-DP stochastic decision-theoretic online learning, up to the trivial cap by the horizon.

\bibliographystyle{plainnat}
\bibliography{biblio}

@book{boucheron2013concentration,
    author = {Boucheron, Stéphane and Lugosi, Gábor and Massart, Pascal},
    title = {Concentration Inequalities: A Nonasymptotic Theory of Independence},
    publisher = {Oxford University Press},
    year = {2013},
    month = {02},
    isbn = {9780199535255},
    doi = {10.1093/acprof:oso/9780199535255.001.0001},
    url = {https://doi.org/10.1093/acprof:oso/9780199535255.001.0001},
}

@article{hoeffding1963probability,
  title={Probability inequalities for sums of bounded random variables},
  author={Hoeffding, Wassily},
  journal={Journal of the American statistical association},
  volume={58},
  number={301},
  pages={13--30},
  year={1963},
  publisher={Taylor \& Francis}
}

@misc{dwork2006,
  title={Differential privacy. Automata, languages and programming},
  author={Dwork, Cynthia},
  year={2006},
  publisher={Springer, Berlin}
}

@inproceedings{mcsherry2007,
  title={Mechanism design via differential privacy},
  author={McSherry, Frank and Talwar, Kunal},
  booktitle={48th Annual IEEE Symposium on Foundations of Computer Science (FOCS'07)},
  pages={94--103},
  year={2007},
  organization={IEEE}
}

@article{freund1997,
  title={A decision-theoretic generalization of on-line learning and an application to boosting},
  author={Freund, Yoav and Schapire, Robert E},
  journal={Journal of computer and system sciences},
  volume={55},
  number={1},
  pages={119--139},
  year={1997},
  publisher={Elsevier}
}

@article{kotlowski2018,
  title={On minimaxity of follow the leader strategy in the stochastic setting},
  author={Kot{\l}owski, Wojciech},
  journal={Theoretical Computer Science},
  volume={742},
  pages={50--65},
  year={2018},
  publisher={Elsevier}
}

@article{mourtada2019,
  title={On the optimality of the hedge algorithm in the stochastic regime},
  author={Mourtada, Jaouad and Ga{\"\i}ffas, St{\'e}phane},
  journal={Journal of Machine Learning Research},
  volume={20},
  number={83},
  pages={1--28},
  year={2019}
}

@article{hu2021,
  title={Near-Optimal Algorithms for Differentially Private Online Learning in a Stochastic Environment},
  author={Hu, Bingshan and Huang, Zhiming and Mehta, Nishant A. and Hegde, Nidhi},
  journal={arXiv preprint arXiv:2102.07929},
  year={2021}
}

@InProceedings{hu2024,
  title = 	 {Open Problem: Optimal Rates for Stochastic Decision-Theoretic Online Learning Under Differentially Privacy},
  author =       {Hu, Bingshan and Mehta, Nishant A.},
  booktitle = 	 {Proceedings of Thirty Seventh Conference on Learning Theory},
  pages = 	 {5330--5334},
  year = 	 {2024},
  editor = 	 {Agrawal, Shipra and Roth, Aaron},
  volume = 	 {247},
  series = 	 {Proceedings of Machine Learning Research},
  month = 	 {30 Jun--03 Jul},
  publisher =    {PMLR},
  url = 	 {https://proceedings.mlr.press/v247/hu24a.html}
}

@InProceedings{wu2026,
  title = 	 {Improved Regret in Stochastic Decision-Theoretic Online Learning under Differential Privacy},
  author =       {Wu, Ruihan and Wang, Yu-Xiang},
  booktitle = 	 {Proceedings of The 37th International Conference on Algorithmic Learning Theory},
  pages = 	 {1--22},
  year = 	 {2026},
  editor = 	 {Telgarsky, Matus and Ullman, Jonathan},
  volume = 	 {313},
  series = 	 {Proceedings of Machine Learning Research},
  month = 	 {23--26 Feb},
  publisher =    {PMLR},
  url = 	 {https://proceedings.mlr.press/v313/wu26a.html}
}

\end{document}